\algnewcommand{\LeftComment}[1]{\Statex \(\triangleright\) #1}
\def\Tr{\mathsf{T}}
\def\Hr{\mathsf{H}}
\newtheorem{theorem}{\hspace{0pt}\bf Theorem}
\newtheorem{definition}{\hspace{0pt}\bf Definition}
\title{STABILITY OF GRAPH NEURAL NETWORKS TO RELATIVE PERTURBATIONS}
\begin{document}
\ninept
\maketitle
\begin{abstract}
Graph neural networks (GNNs), consisting of a cascade of layers applying a graph convolution followed by a pointwise nonlinearity, have become a powerful architecture to process signals supported on graphs. Graph convolutions (and thus, GNNs), rely heavily on knowledge of the graph for operation. However, in many practical cases the GSO is not known and needs to be estimated, or might change from training time to testing time. In this paper, we are set to study the effect that a change in the underlying graph topology that supports the signal has on the output of a GNN. We prove that graph convolutions with integral Lipschitz filters lead to GNNs whose output change is bounded by the size of the relative change in the topology. Furthermore, we leverage this result to show that the main reason for the success of GNNs is that they are stable architectures capable of discriminating features on high eigenvalues, which is a feat that cannot be achieved by linear graph filters (which are either stable or discriminative, but cannot be both). Finally, we comment on the use of this result to train GNNs with increased stability and run experiments on movie recommendation systems.
\end{abstract}
\begin{keywords}
graph neural networks, graph signal processing, network data, stability, graph convolutions
\end{keywords}
%


\section{Introduction} \label{sec:intro}

Networks such as power grids \cite{Owerko18-Power}, transportation networks \cite{Li18-Traffic} or weather sensor networks \cite{Isufi19-VARMA} generate data with an irregular structure dictated by the topology of the network. This data can be modeled as a graph signal by assigning each entry to a node in some underlying given graph that describes the network \cite{Ortega18-GSP}. The graph shift operator (GSO) is a linear map between graph signals where the output value at each node is a weighted average of the input values at neighboring nodes. The GSO is thus any matrix that respects the sparsity of the graph (adjacency \cite{Sandryhaila13-DSPG}, Laplacian matrix \cite{Shuman13-SPG}, etc.), and the output is said to be a \emph{shifted} version of the input.

The operation of graph convolution, defined as a linear combination of shifted version of the signal, is used to compute the output of graph filters in an efficient and decentralized fashion \cite{Segarra17-Linear, Coutino19-Distributed}. Furthermore, graph convolutions are used to build graph neural networks (GNNs), as a cascade of layers each of which applies a graph convolution, followed by a pointwise nonlinearity \cite{Bruna14-DeepSpectralNetworks, Defferrard17-CNNGraphs, Gama19-Architectures}. GNNs offer a nonlinear transformation of the input data that has achieved remarkable performance in wireless networks \cite{Eisen19-Wireless}, decentralized control of robot swarms \cite{Tolstaya19-Flocking} and recommendation systems \cite{Ruiz19-Nonlinear}, among others \cite{Kipf17-ClassifGCN, Ruiz19-GraphRNN}. Graph filters and GNNs rely heavily on the knowledge of the GSO. But if we do not know the graph and need to estimate it \cite{Segarra17-Template}, or if the graph changes with time \cite{Gama19-Control}, or if we want to train on one graph but test on another (transfer learning) \cite{Gama19-Scattering}, then it is of utmost importance to characterize how graph filters and GNNs react to changes in the underlying graph support.

In this paper, we start by considering GSOs that are permutations of each other and prove that graph convolutions are unaffected by these node relablings (permutation equivariance). Then, we prove that for graph convolutions computed on two arbitrary GSOs, the output will differ in a manner proportional to the relative distance between the GSOs (stability to relative perturbations). These results show that there is a trade-off between stability and discriminability for linear graph filters, but that this trade-off can be overcome by the use of pointwise nonlinearities. This renders GNNs both stable and discriminative, a feat that cannot be achieved by linear graph filters.

Stability results in graph neural networks have only been developed, so far, for graph scattering transforms, which involve carefully designed, non-trainable filters \cite{ZouLerman18-Scattering, Gama19-ScatteringDiffusion, Gama19-Scattering}. In \cite{ZouLerman18-Scattering}, stability to permutations is studied, as well as to perturbations on the eigenvalues and eigenvectors of the underlying graph support. In \cite{Gama19-ScatteringDiffusion} graph perturbations are measured in terms of the diffusion distance, while in \cite{Gama19-Scattering} different graph wavelets are compared in their stability \cite{Hammond11-Wavelets, Shuman15-Wavelets}.

In Sec.~\ref{sec:graphFilters} we introduce the GSP framework, define graph convolutions, and prove stability for linear graph filters. In Sec.~\ref{sec:GNN} we prove stability for GNNs and discuss how the conditions imposed on filters determine a trade-off between discriminability and stability, which can be overcome by the inclusion of pointwise nonlinearities. Finally, in Sec.~\ref{sec:sims} we show how to train a GNN while controlling for its stability and run experiments on a movie recommendation problem. Conclusions are drawn in Sec.~\ref{sec:conclusions}.

\vspace{-0.15cm}


\section{Stability of Graph Filters} \label{sec:graphFilters}

Let $\ccalG=(\ccalV, \ccalE, \ccalW)$ be a graph with a set of $N$ nodes $\ccalV$, a set of edges $\ccalE \subseteq \ccalV \times \ccalV$ and an edge weight function $\ccalW: \ccalE \to \reals_{+}$. This graph acts as the underlying support for the available data $\bbx \in \reals^{N}$. That is, $\bbx$ is modeled as a \emph{graph signal} where each entry $[\bbx]_{n}=x_{n}$ corresponds to the data value assigned to node $n$ \cite{Sandryhaila13-DSPG, Shuman13-SPG}. The data $\bbx$ is related to the underlying graph support by means of a linear map between graph signals $\bbS: \reals^{N} \to \reals^{N}$ that we denote a \emph{graph shift operator} (GSO) \cite{Ortega18-GSP}. The GSO is a linear operator $\bbS$ that updates the data value on each node by a weighted average of the values at neighboring nodes, i.e. it \emph{shifts} the signal across the graph. Therefore, the GSO can be written as a $N \times N$ matrix that respects the sparsity of the graph, $[\bbS]_{ij} = s_{ij}= 0$ if $i \neq j$ and $(j,i) \notin \ccalE$, and the value of the output signal at node $i$ is
\begin{equation} \label{eqn:graphShift}
[\bbS \bbx]_{i} = \sum_{j=1}^{N} [\bbS]_{ij} [\bbx]_{j} = \sum_{j \in \ccalN_{i}} s_{ij} x_{j}
\end{equation}
where $\ccalN_{i} = \{j \in \ccalV : (j,i) \in \ccalE\}$ is the set of neighboring nodes of $i$, and the last equality follows from the sparsity pattern of matrix $\bbS$. Examples of GSO typically used in the literature include the adjacency matrix \cite{Sandryhaila13-DSPG}, the Laplacian matrix \cite{Shuman13-SPG} and the Markov matrix \cite{Heimowitz17-MarkovGSP}.

To process data $\bbx$ in a manner that takes into account the irregular structure imposed by the underlying graph we need operations built on \eqref{eqn:graphShift}. In this light, we define the \emph{graph convolution} as a linear combination of shifted versions of the signal \cite{Segarra17-Linear}
\begin{equation} \label{eqn:graphConv}
\bby = \sum_{k=0}^{K-1} h_{k} \bbS^{k} \bbx = \bbH(\bbS) \bbx
\end{equation}
where $\bbh = [h_{0},\ldots,h_{K-1}] \in \reals^{K}$ is a set of $K$ filter taps, each one weighing the information located at the $k$-hop neighborhood. We say that $\bbH(\bbS)$ is a \emph{graph filter} \cite{Segarra17-Linear}. We note that \eqref{eqn:graphConv} can be calculated by means of $K-1$ exchanges of information with the one-hop neighborhood. Also, \eqref{eqn:graphConv} boils down to regular convolution when modeling time signals as being supported on a directed cycle, see \cite{Gama19-Architectures} for details.

Graph filters exhibit the key property of \emph{permutation equivariance}. Define $\ccalP = \{\bbP \in \{0,1\}^{N \times N}: \bbP \bbone = \bbone, \bbP^{\Tr} \bbone = \bbone\}$ the set of all permutation matrices. We prove the following.
%
\begin{theorem}[Permutation equivariance]
    \label{thm:permutationEquivariance}
Let $\bbS$ be the GSO of a graph $\ccalG$ and $\hbS = \bbP^{\Tr} \bbS \bbP$ be the GSO of a permuted version of $\ccalG$. Likewise, consider signals $\bbx$ and $\hbx = \bbP^{\Tr} \bbx$. Then,
\begin{equation}
    \bbH(\hbS) \hbx = \bbP^{\Tr} \bbH(\bbS) \bbx.
\end{equation}
\end{theorem}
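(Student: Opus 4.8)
The plan is to reduce the claim to a single algebraic identity about powers of $\hbS$ and then invoke linearity of the filter. First I would record the one structural fact I need: every $\bbP \in \ccalP$ is orthogonal. Indeed, a $0/1$ matrix whose rows and columns each sum to one (i.e.\ $\bbP\bbone = \bbone$ and $\bbP^{\Tr}\bbone = \bbone$) has exactly one nonzero entry in each row and each column, so it is a genuine permutation matrix; hence $\bbP^{\Tr}\bbP = \bbP\bbP^{\Tr} = \bbI$ and $\bbP^{-1} = \bbP^{\Tr}$. This is the only place the definition of $\ccalP$ enters.

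The key step is to show, by induction on $k$, that $\hbS^{k} = \bbP^{\Tr}\bbS^{k}\bbP$ for every $k \ge 0$. The base case $k = 0$ reads $\bbI = \bbP^{\Tr}\bbI\bbP$, which is just orthogonality. For the inductive step, assuming $\hbS^{k} = \bbP^{\Tr}\bbS^{k}\bbP$, we compute
\begin{equation}
\hbS^{k+1} = \hbS^{k}\,\hbS = \bbP^{\Tr}\bbS^{k}\bbP \,\bbP^{\Tr}\bbS\bbP = \bbP^{\Tr}\bbS^{k}\bbS\bbP = \bbP^{\Tr}\bbS^{k+1}\bbP ,
\end{equation}
where the third equality uses the cancellation $\bbP\bbP^{\Tr} = \bbI$. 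This telescoping of the internal $\bbP\bbP^{\Tr}$ factors is essentially the entire content of the argument.

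Finally, I would substitute this identity and $\hbx = \bbP^{\Tr}\bbx$ into the definition \eqref{eqn:graphConv} of the graph convolution, applying $\bbP\bbP^{\Tr} = \bbI$ once more and pulling the constant matrix $\bbP^{\Tr}$ out of the finite sum by linearity:
\begin{equation}
\bbH(\hbS)\hbx = \sum_{k=0}^{K-1} h_{k}\,\hbS^{k}\hbx = \sum_{k=0}^{K-1} h_{k}\,\bbP^{\Tr}\bbS^{k}\bbP\,\bbP^{\Tr}\bbx = \bbP^{\Tr}\!\left(\sum_{k=0}^{K-1} h_{k}\bbS^{k}\right)\!\bbx = \bbP^{\Tr}\bbH(\bbS)\bbx .
\end{equation}

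I do not expect a genuine obstacle here: the only point that deserves explicit care is that membership in $\ccalP$ forces orthogonality, so that $\bbP^{-1} = \bbP^{\Tr}$; everything after that is routine. A slightly slicker route would be to first prove the operator identity $\bbH(\bbP^{\Tr}\bbS\bbP) = \bbP^{\Tr}\bbH(\bbS)\bbP$ — valid for any polynomial evaluated at a conjugated matrix — and then right-multiply by $\bbP^{\Tr}\bbx$, but the inductive version above is self-contained and makes the mechanism transparent.
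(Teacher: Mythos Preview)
Your argument is correct and complete; the paper itself does not give an in-text proof but defers to an external appendix, and the computation you outline (orthogonality of $\bbP$, induction to get $\hbS^{k}=\bbP^{\Tr}\bbS^{k}\bbP$, then linearity) is exactly the standard derivation one would expect there. There is nothing to add.
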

\begin{proof}
    See \cite[Appendix A]{Gama19-Stability}.
\end{proof}
\noindent Theorem~\ref{thm:permutationEquivariance} essentially states that a graph filter on a permuted graph, applied to a correspondingly permuted signal, yields an output that is a permuted version of the original output. The immediate consequence of this theorem, is that graph filters are invariant to node relabelings.

We are ultimately interested in the effect that a more general perturbation of the GSO has on the output of a graph filter with fixed filter taps. Given two GSOs $\bbS$ and $\hbS$, consider the distance between filters $\bbH(\bbS)$ and $\bbH(\hbS)$ to be
\begin{equation} \label{eqn:filterDistance}
    \|\bbH(\bbS) - \bbH(\hbS)\|_{\ccalP} = \min_{\bbP \in \ccalP} \max_{\bbx: \|\bbx\|=1} \| \bbP^{\Tr} \bbH(\bbS) \bbx - \bbH(\bbP^{\Tr} \hbS \bbP)  \bbP^{\Tr} \bbx \|.
\end{equation}
We note that \eqref{eqn:filterDistance} is the operator norm, modulo permutations. We also note that if $\hbS = \bbP^{\Tr} \bbS \bbP$, then $\|\bbH(\bbS) - \bbH(\hbS)\|_{\ccalP} = 0$ in virtue of Thm.~\ref{thm:permutationEquivariance}. To measure the size of the GSO perturbation, we denote by $\ccalE(\bbS, \hbS) = \{\bbE: \bbP^{\Tr} \bbS \bbP = \bbS + (\bbE \bbS+ \bbS \bbE) , \bbP \in \ccalP\}$ the set of relative errror matrix and define the distance between $\bbS$ and $\hbS$ as
\begin{equation} \label{eqn:GSOdistance}
    d(\bbS, \hbS) = \min_{\bbE \in \ccalE(\bbS, \hbS)} \| \bbE\|.
\end{equation}
In essence, given a set of filter taps $\bbh$, we want to determine how much $\|\bbH(\bbS) - \bbH(\hbS)\|_{\ccalP}$ changes in relation to $d(\bbS, \hbS)$.

\def \thisplotscale {2.3}
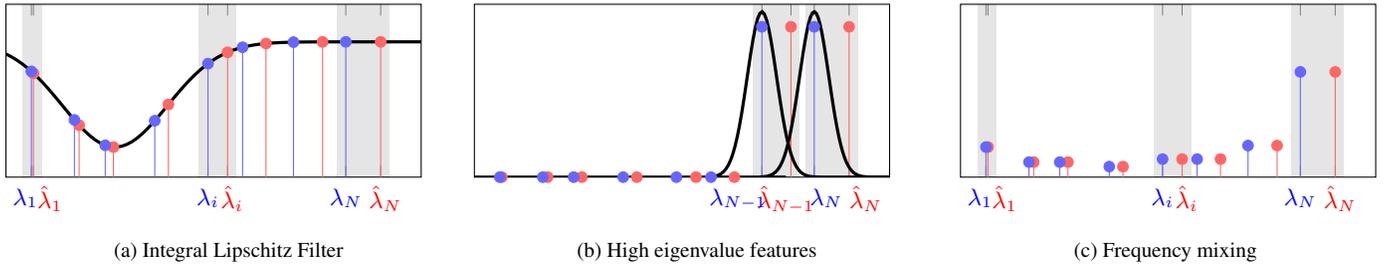
\begin{figure*}[t]
    \begin{subfigure}{0.33\textwidth}

\def \unit {\thisplotscale cm}

\def \frequencyresponse 
     { 0.9 - 0.7*exp(-(0.7*(x-1.6))^2) }

\hspace{-2.9mm}
\begin{tikzpicture}[x = 1*\unit, y=1*\unit]

\def \factorx {2.4/8}
\def \deltax  {0.5*\factorx}
\def \shadeshift  {0.05}

\path [fill=black!20, opacity = 0.5] 
              (\deltax - 0.001*\factorx - \shadeshift, 0.00) rectangle 
              (\deltax + 0.030*\factorx + \shadeshift, 1.00);
\path [fill=black!20, opacity = 0.5] 
              (\deltax + 3.393*\factorx - \shadeshift, 0.00) rectangle 
              (\deltax + 3.770*\factorx + \shadeshift, 1.00);
\path [fill=black!20, opacity = 0.5] 
              (\deltax + 6.048*\factorx - \shadeshift, 0.00) rectangle 
              (\deltax + 6.720*\factorx + \shadeshift, 1.00);

\begin{axis}[scale only axis,
             width  = 2.4*\unit,
             height = 1*\unit,
             xmin = -0.5, xmax=7.5,
             xtick = {0.03, -0.01, 3.77, 3.393, 6.72, 6.048},
             xticklabels = {\red{$\qquad\hat{\lambda}_1\phantom{\lambda}$},
                            \blue{$\lambda_1\ \ $}, 
                            \red{$\quad\hat{\lambda}_i\phantom{\lambda}$}, 
                            \blue{$\lambda_i$},
                            \red{$\quad\hat{\lambda}_{N}\phantom{\lambda}$},
                            \blue{$\lambda_N$}},
             ymin = -0, ymax = 1.15,
             ytick = {-1},
             typeset ticklabels with strut,
             enlarge x limits=false]

\addplot+[samples at = {0.03, 0.91, 1.57, 
                        2.63, 3.77, 4.51, 
                        5.60, 6.72}, 
          color = red!60, 
          ycomb, 
          mark=otimes*, 
          mark options={red!60}]
         {\frequencyresponse};

\addplot+[samples at = {-0.01, 0.819, 1.413, 
                        2.367, 3.393, 4.059, 
                        5.04, 6.048}, 
          color = blue!60, 
          ycomb, 
          mark=oplus*, 
          mark options={blue!60}]
         {\frequencyresponse};

\addplot[ domain=-0.5:7.5, 
          samples = 80, 
          color = black,
          line width = 1.2]
         {\frequencyresponse};

\end{axis}
\end{tikzpicture}

        \caption{Integral Lipschitz Filter}
        \label{subfig:ILfilter}
    \end{subfigure}
    \hfill
    \begin{subfigure}{0.33\textwidth}
        \centering

\def \unit {\thisplotscale cm}

\def \frequencyresponse 
     {1.1*exp(-(2.5*(x-6.048))^2}
\def \frequencyresponsetwo 
     {1.1*exp(-(2.5*(x-5.04))^2}

\hspace{-2.9mm}
\begin{tikzpicture}[x = 1*\unit, y=1*\unit]

\def \factorx {2.4/8}
\def \deltax  {0.5*\factorx}
\def \shadeshift  {0.05}

\path [fill=black!20, opacity = 0.5] 
              (\deltax + 6.048*\factorx - \shadeshift, 0.00) rectangle 
              (\deltax + 6.720*\factorx + \shadeshift, 1.00);

\path [fill=black!20, opacity = 0.5] 
              (\deltax + 5.04*\factorx - \shadeshift, 0.00) rectangle 
              (\deltax + 5.60*\factorx + \shadeshift, 1.00);

\begin{axis}[scale only axis,
             width  = 2.4*\unit,
             height = 1*\unit,
             xmin = -0.5, xmax=7.5,
             xtick = {5.60, 5.04, 6.72, 6.048},
             xticklabels = {\red{$\qquad\hat{\lambda}_{N-1}\phantom{\lambda_{N-1}}$},
                            \blue{$\lambda_{N-1}\qquad  $}, 
                            \red{$\qquad\hat{\lambda}_{N}\phantom{\lambda}$},
                            \blue{$\quad\lambda_N$}},
             ymin = -0, ymax = 1.15,
             ytick = {-1},
             typeset ticklabels with strut,
             enlarge x limits=false]

\addplot+[samples at = {0.03, 0.91, 1.57, 
                        2.63, 3.77, 4.51}, 
          color = red!60, 
          ycomb, 
          mark=otimes*, 
          mark options={red!60}]
         {0};

\addplot+[samples at = {6.72, 5.60}, 
          color = red!60, 
          ycomb, 
          mark=otimes*, 
          mark options={red!60}]
         {1};

\addplot+[samples at = {-0.01, 0.819, 1.413, 
                        2.367, 3.393, 4.059}, 
          color = blue!60, 
          ycomb, 
          mark=oplus*, 
          mark options={blue!60}]
         {0};

\addplot+[samples at = {6.048, 5.04}, 
          color = blue!60, 
          ycomb, 
          mark=oplus*, 
          mark options={blue!60}]
         {1};

\addplot[ domain=-0.5:5.5, 
          samples = 2, 
          color = black,
          line width = 1.2]
         {0};

\addplot[ domain=5.0:7.5, 
          samples = 70, 
          color = black,
          line width = 1.2]
         {\frequencyresponse};

\addplot[ domain=4.0:7.0, 
          samples = 70, 
          color = black,
          line width = 1.2]
         {\frequencyresponsetwo};
\addplot[ domain=7.0:7.5, 
          samples = 2, 
          color = black,
          line width = 1.2]
         {0};

\end{axis}
\end{tikzpicture}

        \caption{High eigenvalue features}
        \label{subfig:highFeature}
    \end{subfigure}
    \hfill
    \begin{subfigure}{0.33\textwidth}
        \centering

\def \unit {\thisplotscale cm}

\def \frequencyresponse 
     {   0.8*exp(-(1*(x-1.2))^2) 
       + 0.7*exp(-(0.7*(x-4))^2) 
       + 0.8*exp(-(1.4*(x-6))^2) 
       + 0.1}

\hspace{-2.9mm}
\begin{tikzpicture}[x = 1*\unit, y=1*\unit]

\def \factorx {2.4/8}
\def \deltax  {0.5*\factorx}
\def \shadeshift  {0.05}

\path [fill=black!20, opacity = 0.5] 
              (\deltax - 0.001*\factorx - \shadeshift, 0.00) rectangle 
              (\deltax + 0.030*\factorx + \shadeshift, 1.00);
\path [fill=black!20, opacity = 0.5] 
              (\deltax + 3.393*\factorx - \shadeshift, 0.00) rectangle 
              (\deltax + 3.770*\factorx + \shadeshift, 1.00);
\path [fill=black!20, opacity = 0.5] 
              (\deltax + 6.048*\factorx - \shadeshift, 0.00) rectangle 
              (\deltax + 6.720*\factorx + \shadeshift, 1.00);

\begin{axis}[scale only axis,
             width  = 2.4*\unit,
             height = 1*\unit,
             xmin = -0.5, xmax=7.5,
             xtick = {0.03, -0.01, 3.77, 3.393, 6.72, 6.048},
             xticklabels = {\red{$\qquad\hat{\lambda}_1\phantom{\lambda}$},
                            \blue{$\lambda_1\ \ $}, 
                            \red{$\quad\hat{\lambda}_i\phantom{\lambda}$}, 
                            \blue{$\lambda_i$},
                            \red{$\quad\hat{\lambda}_{N}\phantom{\lambda}$},
                            \blue{$\lambda_N$}},
             ymin = -0, ymax = 1.15,
             ytick = {-1},
             typeset ticklabels with strut,
             enlarge x limits=false]

\addplot+[color = red!60, 
          ycomb, 
          mark=otimes*, 
          mark options={red!60}]
          coordinates { (0.03, 0.20)
                        (0.91, 0.10)
                        (1.57, 0.10)
                        (2.63, 0.07)
                        (3.77, 0.12)
                        (4.51, 0.12)
                        (5.60, 0.21)
                        (6.72, 0.70)};

\addplot+[samples at = {-0.01, 0.819, 1.413, 
                        2.367, 3.393, 4.059, 
                        5.04, 6.048}, 
          color = blue!60, 
          ycomb, 
          mark=oplus*, 
          mark options={blue!60}]
          coordinates { (-0.010, 0.20)
                        ( 0.819, 0.10)
                        ( 1.413, 0.10)
                        ( 2.367, 0.07)
                        ( 3.393, 0.12)
                        ( 4.059, 0.12)
                        ( 5.040, 0.21)
                        ( 6.048, 0.70)};

\end{axis}
\end{tikzpicture}
        \caption{Frequency mixing}
        \label{subfig:frequencyMixing}
    \end{subfigure}
    \caption{\subref{subfig:ILfilter} Frequency response for an integral Lipschitz filter (in black), eigenvalues for $\bbS$ (in blue) and eigenvalues for $\hbS$ (in red). Note that larger eigenvalues exhibit a larger change. \subref{subfig:highFeature} Separating energy located at $\lambda_{N-1}$ from that at $\lambda_{N}$ requires filters with sharp transitions that are not integral Lipschitz. Then, a change in eigenvalues renders these filters useless (they are not stable) \subref{subfig:frequencyMixing} Applying a ReLU to a signal with all its energy located at $\lambda_{N}$ results in a signal with energy spread through the spectrum. Information on low eigenvalues can be discriminated in a stable fashion.\vspace{-0.2cm}}
    \label{fig:freqResponses}
\end{figure*}
%

We can use a frequency analysis to separate the action of any given filter on a signal into the effects of the specific filter taps and that of the underlying graph support. Let $\bbS = \bbV \bbLambda \bbV^{\Hr}$ be the eigendecomposition of the GSO, with $\bbV$ the eigenvector basis $\{\bbv_{n}\}_{n=1}^{N}$ and $\bbLambda$ a diagonal matrix containing the eigenvalues $\{\lambda_{n}\}_{n=1}^{N}$. The \emph{graph Fourier transform} (GFT) $\tbx$ of a signal $\bbx$ is computed as its projection onto the eigenvector basis of the GSO, $\tbx = \bbV^{\Hr} \bbx$ \cite{Sandryhaila14-Freq}. Then, the GFT of the output of a graph filter $\bby = \bbH(\bbS) \bbx$ becomes
\begin{equation} \label{eqn:filterOutputFrequency}
    \tby = \bbV^{\Hr} \big( \bbH(\bbS) \bbx\big) = \sum_{k=0}^{\infty} h_{k} \bbLambda^{k} \tbx = h(\bbLambda) \tbx.
\end{equation}
where the function $h: \reals \to \reals$ is called the filter's \emph{frequency response}
\begin{equation} \label{eqn:frequencyResponse}
    h(\lambda) = \sum_{k=0}^{\infty} h_{k} \lambda^{k}.
\end{equation}
We note that since $h$ is an analytic function, its application to a matrix is well defined. From \eqref{eqn:filterOutputFrequency} we see that the effect of the filter on the $i$th frequency coefficient is given by  $\tdy_{i} = h(\lambda_{i}) \tdx_{i}$. That is, the frequency content of $\bbx$ at the $i$th eigenvalue, gets modified by $h(\lambda_{i})$. This depends, on one hand, on the specific filter taps that determine the frequency response $h(\lambda)$ and, on the other hand, on the specific GSO under consideration that instantiates the frequency response on its eigenvalue $\lambda_{i}$. We thus note that the frequency response \eqref{eqn:frequencyResponse} is independent of the specific graph support, and only depends on the filter taps $\bbh$.

We can control the impact of changes in the GSO by carefully designing the frequency response \eqref{eqn:frequencyResponse}. In particular, we focus on filters that are \emph{integral Lipschitz}, see Fig.~\ref{subfig:ILfilter}.
%
\begin{definition}[Integral Lipschitz filters]
    \label{def:integralLipschitz}
Given filter taps $\bbh$ and frequency response $h(\lambda)$ [cf. \eqref{eqn:frequencyResponse}], we say that the corresponding filter is \emph{integral Lipschitz} if it satisfies that $|h(\lambda)| \leq 1$ and there exists a constant $C>0$ such that for all $\lambda_{1},\lambda_{2}$ it holds that
\begin{equation} \label{eqn:integralLipschitz}
    \big| h(\lambda_{2}) - h(\lambda_{1}) \big| \leq C \frac{| \lambda_{2} - \lambda_{1}|}{|\lambda_{1}+\lambda_{2}|/2}.
\end{equation}
\end{definition}
\noindent Integral Lipschitz filters are those whose frequency response is Lipschitz with a constant that depends on the midpoint value of the interval. Alternatively, see that filters that satisfy \eqref{eqn:integralLipschitz} also satisfy $|\lambda h'(\lambda)| \leq C$, where $h'$ is the derivative of $h$. These are filters that can vary arbitrarily fast for $\lambda \approx 0$, but have to be constant for $\lambda \to \infty$. We also note that this condition is reminiscent of the scale invariance of wavelet transforms \cite[Ch. 7]{Daubechies92-Wavelets} \cite{Hammond11-Wavelets, Shuman15-Wavelets}.

For filters that are integral Lipschitz, we can prove the following stability result.
%
\begin{theorem}[Stability of graph filters]
    \label{thm:graphFilterStability}
Let $\bbS$ and $\hbS$ be two GSOs such that $d(\bbS,\hbS) \leq \varepsilon$ where the error matrix $\bbE \in \ccalE(\bbS,\hbS)$ has an eigendecomposition $\bbE = \bbU \bbM \bbU^{\Hr}$. Consider a given set of filter taps $\bbh$ of an integral Lipschitz filter with constant $C$. Then, 
\begin{equation} \label{eqn:graphFilterStability}
    \big\| \bbH(\bbS) - \bbH(\hbS) \big\|_{\ccalP} \leq 2C \left( 1+ \delta \sqrt{N} \right) \varepsilon + \ccalO(\varepsilon^{2})
\end{equation}
with $\delta := (\|\bbU - \bbV+1)^{2}-1$ measuring the eigenvector basis misalingment.
\end{theorem}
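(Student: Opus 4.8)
The plan is to reduce the claim to a first-order perturbation computation carried out in the eigenbasis of $\bbS$. By the permutation equivariance of Theorem~\ref{thm:permutationEquivariance}, the filter does not see node relabelings, so evaluating the $\ccalP$-distance in \eqref{eqn:filterDistance} amounts to fixing the permutation and the error matrix $\bbE\in\ccalE(\bbS,\hbS)$ that attain $d(\bbS,\hbS)\le\varepsilon$ in \eqref{eqn:GSOdistance}, and then bounding the ordinary operator norm $\|\bbH(\bbS)-\bbH(\hbS)\|$ under the relative model $\hbS=\bbS+(\bbE\bbS+\bbS\bbE)$ with $\|\bbE\|\le\varepsilon$.

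I would then Taylor-expand each power, $\hbS^{k}=\bbS^{k}+\sum_{r=0}^{k-1}\bbS^{r}(\bbE\bbS+\bbS\bbE)\bbS^{k-1-r}+\ccalO(\varepsilon^{2})$, sum against the taps $h_{k}$, and move to the spectral domain with $\bbS=\bbV\bbLambda\bbV^{\Hr}$ and $\tbE:=\bbV^{\Hr}\bbE\bbV$. A short calculation shows that the $(i,j)$ entry of $\bbV^{\Hr}\big(\bbH(\hbS)-\bbH(\bbS)\big)\bbV$ is, to first order, $[\tbE]_{ij}$ times $\sum_{k}h_{k}(\lambda_{i}+\lambda_{j})\sum_{r=0}^{k-1}\lambda_{i}^{r}\lambda_{j}^{k-1-r}$. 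The inner geometric sum equals $(\lambda_{i}^{k}-\lambda_{j}^{k})/(\lambda_{i}-\lambda_{j})$, which collapses the coefficient to $g_{ij}:=(\lambda_{i}+\lambda_{j})\big(h(\lambda_{i})-h(\lambda_{j})\big)/(\lambda_{i}-\lambda_{j})$, read as $2\lambda_{i}h'(\lambda_{i})$ when $\lambda_{i}=\lambda_{j}$. Here the integral Lipschitz property \eqref{eqn:integralLipschitz}, together with the equivalent bound $|\lambda h'(\lambda)|\le C$, gives $|g_{ij}|\le 2C$ uniformly in $i,j$. Hence $\bbV^{\Hr}\big(\bbH(\hbS)-\bbH(\bbS)\big)\bbV=\tbE\circ\bbG+\ccalO(\varepsilon^{2})$, where $\circ$ is the Hadamard product and $\bbG=[g_{ij}]$ has all entries at most $2C$ in magnitude.

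The remaining step absorbs the mismatch between the eigenbasis $\bbU$ of $\bbE$ and the eigenbasis $\bbV$ of $\bbS$. Writing $\bbV^{\Hr}\bbU=\bbI+\bbW$ with $\|\bbW\|=\|\bbU-\bbV\|$, one gets $\tbE=\bbM+\bbR$ with $\bbR=\bbW\bbM+\bbM\bbW^{\Hr}+\bbW\bbM\bbW^{\Hr}$ and $\|\bbR\|\le\big(2\|\bbW\|+\|\bbW\|^{2}\big)\|\bbM\|=\delta\|\bbM\|\le\delta\varepsilon$. The diagonal term $\bbM\circ\bbG$ is again diagonal, so $\|\bbM\circ\bbG\|\le 2C\|\bbM\|\le 2C\varepsilon$ with no dimensional loss; for the off-diagonal remainder I would pass through the Frobenius norm, $\|\bbR\circ\bbG\|\le\|\bbR\circ\bbG\|_{F}\le 2C\|\bbR\|_{F}\le 2C\sqrt{N}\,\|\bbR\|\le 2C\sqrt{N}\,\delta\varepsilon$. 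Summing the two bounds and using unitarity of $\bbV$ gives exactly \eqref{eqn:graphFilterStability}.

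The hard part is this last estimate. Because $\bbU\ne\bbV$ in general, $\tbE$ carries off-diagonal mass and $\tbE\circ\bbG$ cannot be controlled by $\|\bbG\|\,\|\tbE\|$; routing through the Frobenius norm is what forces the $\sqrt{N}$ factor, and the reason the leading constant stays $2C$ rather than $2C\sqrt{N}$ is that the diagonal block — precisely the part that survives when the two bases align — incurs no such loss, so perfectly aligned bases ($\delta=0$) recover the clean bound $2C\varepsilon+\ccalO(\varepsilon^{2})$. A secondary technical point is justifying the $\ccalO(\varepsilon^{2})$ remainder: this uses $|h(\lambda)|\le1$ on all of $\reals$ and the fact that the spectrum of $\hbS$ stays within an $\ccalO(\varepsilon)$ neighborhood of that of $\bbS$, so the quadratic-and-higher terms dropped from the expansion of $\hbS^{k}$ remain summable against the taps.
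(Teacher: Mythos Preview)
Your proposal is correct and follows essentially the same route as the proof referenced in \cite[Appendix C]{Gama19-Stability}: first-order expansion of $\hbS^{k}$ under the relative model, passage to the eigenbasis of $\bbS$ where the perturbation becomes a Hadamard product $\tbE\circ\bbG$ with $|g_{ij}|\le 2C$ by integral Lipschitz, and then the diagonal/off-diagonal split of $\tbE$ via the eigenbasis misalignment $\bbU$ versus $\bbV$, with the Frobenius-norm detour supplying the $\sqrt{N}$ on the off-diagonal piece. The identification $\delta=(\|\bbU-\bbV\|+1)^{2}-1=2\|\bbW\|+\|\bbW\|^{2}$ and the observation that the diagonal block incurs no dimensional loss are exactly the mechanism behind the constant $2C(1+\delta\sqrt{N})$.
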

\begin{proof}
    See \cite[Appendix C]{Gama19-Stability}.
\end{proof}
\noindent Theorem~\ref{thm:graphFilterStability} shows that the change at the output of a graph filter due to changes in the underlying GSO is proportional to the distance between those GSOs [cf. \eqref{eqn:GSOdistance}]. The proportionality constant can be analyzed in two separate parts. First, we have the integral Lipschitz constant $C$ which can be controlled by careful design of the frequency response (filter taps). Second, $(1+\delta \sqrt{N})$ depends on the specific family of perturbations and worsens as the graph grows larger. This second part cannot be controlled and is dependent on the specific perturbations the graph support will be subject to. However, we can impose a structural constraint on the perturbation matrix $\bbE$ to obtain a constant that only depends on $C$. See \cite[Thm.~4]{Gama19-Stability} for details.


\section{Stability of Graph Neural Networks} \label{sec:GNN}

A graph neural network (GNN) is a nonlinear map $\bbPhi(\bbS, \bbx)$ that is applied to the input $\bbx$ and takes into account the underlying graph by means of the GSO $\bbS$. It consists of a cascade of $L$ layers, each of them applying a graph filter $\bbH_{\ell}(\bbS)$ followed by a pointwise nonlinearity $\sigma_{\ell}$ (activation function)
\begin{equation} \label{eqn:GNN}
    \bbx_{\ell} = \sigma_{\ell} \big( \bbH_{\ell}(\bbS) \bbx_{\ell-1} \big)
\end{equation}
for $\ell=1,\ldots,L$, where $\bbx_{0}=\bbx$ the input signal, and $\bbPhi(\bbS,\bbx) = \bbx_{L}$ the output of the last layer \cite{Bruna14-DeepSpectralNetworks, Defferrard17-CNNGraphs, Gama19-Architectures}.

The use of graph filters \eqref{eqn:graphConv} as $\bbH_{\ell}(\bbS)$ means that GNNs inherit the properties of permutation equivariance and stability to relative perturbations from them.
%
\begin{theorem}[Permutation equivariance of GNNs]
    \label{thm:permutationEquivarianceGNN}
Let $\bbS$ be the GSO of a graph $\ccalG$ and $\hbS = \bbP^{\Tr} \bbS \bbP$ be the GSO of a permuted version of $\ccalG$. Likewise, consider signals $\bbx$ and $\hbx = \bbP^{\Tr} \bbx$. Then,
\begin{equation} \label{eqn:permutationEquivarianceGNN}
    \bbPhi(\hbS, \hbx) = \bbP^{\Tr} \bbPhi(\bbS, \bbx).
\end{equation}
\end{theorem}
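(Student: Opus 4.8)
The plan is to argue by induction on the layer index $\ell$, peeling off one layer at a time: Theorem~\ref{thm:permutationEquivariance} lets me commute the permutation past the graph filter in \eqref{eqn:GNN}, and the entrywise action of $\sigma_\ell$ lets me commute it past the nonlinearity. Concretely, I would write $\{\bbx_\ell\}_{\ell=0}^{L}$ for the layer outputs generated by \eqref{eqn:GNN} on $(\bbS,\bbx)$ and $\{\hbx_\ell\}_{\ell=0}^{L}$ for those generated on $(\hbS,\hbx)$, and prove the claim $\hbx_\ell = \bbP^{\Tr}\bbx_\ell$ for all $\ell$; evaluating at $\ell = L$ and using $\bbPhi(\bbS,\bbx) = \bbx_L$ then yields \eqref{eqn:permutationEquivarianceGNN}.

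For the base case, $\hbx_0 = \hbx = \bbP^{\Tr}\bbx = \bbP^{\Tr}\bbx_0$ by hypothesis. For the inductive step, assuming $\hbx_{\ell-1} = \bbP^{\Tr}\bbx_{\ell-1}$, I would first invoke Theorem~\ref{thm:permutationEquivariance} — the filter taps of $\bbH_\ell$ do not depend on the GSO, so it applies verbatim — with the signal $\bbx_{\ell-1}$, obtaining $\bbH_\ell(\hbS)\hbx_{\ell-1} = \bbH_\ell(\bbP^{\Tr}\bbS\bbP)\bbP^{\Tr}\bbx_{\ell-1} = \bbP^{\Tr}\bbH_\ell(\bbS)\bbx_{\ell-1}$. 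Then I would push $\bbP^{\Tr}$ through $\sigma_\ell$: since $\bbP$ is a permutation matrix, multiplying a vector by $\bbP^{\Tr}$ only reorders its coordinates, and because $\sigma_\ell$ applies a fixed scalar map to each coordinate, $\sigma_\ell(\bbP^{\Tr}\bbz) = \bbP^{\Tr}\sigma_\ell(\bbz)$ for every $\bbz$. Chaining these with $\bbz = \bbH_\ell(\bbS)\bbx_{\ell-1}$ gives $\hbx_\ell = \sigma_\ell\big(\bbH_\ell(\hbS)\hbx_{\ell-1}\big) = \bbP^{\Tr}\sigma_\ell\big(\bbH_\ell(\bbS)\bbx_{\ell-1}\big) = \bbP^{\Tr}\bbx_\ell$, which closes the induction.

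I do not expect a genuine obstacle: the statement is a structural corollary of Theorem~\ref{thm:permutationEquivariance}. The single point worth spelling out carefully is the identity $\sigma_\ell \circ \bbP^{\Tr} = \bbP^{\Tr} \circ \sigma_\ell$, which holds precisely because $\bbP$ is a $0/1$ permutation matrix (exactly one nonzero per row and column) and $\sigma_\ell$ is pointwise; it would be false for a general orthogonal matrix, which is exactly why the quantitative stability analysis later in Sec.~\ref{sec:GNN} cannot simply reuse this argument and must instead go through the bound of Theorem~\ref{thm:graphFilterStability}.
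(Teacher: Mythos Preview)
Your induction is correct and is exactly the approach the paper indicates: the discussion after Theorem~\ref{thm:permutationEquivarianceGNN} explicitly says the result ``follows through because the nonlinearities are pointwise and therefore applied separately to each node, bearing no effect on the permutation equivariance from graph filters,'' which is precisely your layer-by-layer argument combining Theorem~\ref{thm:permutationEquivariance} with $\sigma_\ell(\bbP^{\Tr}\bbz)=\bbP^{\Tr}\sigma_\ell(\bbz)$.
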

\begin{proof}
    See \cite[Appendix D]{Gama19-Stability}.
\end{proof}
\noindent Result in Thm.~\ref{thm:permutationEquivarianceGNN} follows through because the nonlinearities are pointwise and therefore applied separately to each node, bearing no effect on the permutation equivariance from graph filters. We note that there are local activation functions involving neighboring exchanges that also preserve the permutation equivariance property \cite{Ruiz19-Nonlinear}.

For the stability result to hold, we need to use pointwise nonlinearities that are \emph{normalized} Lipschitz, i.e. Lipschitz functions with constant equal to $1$, $|\sigma_{\ell}(b) - \sigma_{\ell}(a)| \leq |b - a|$ for all $b,a \in \reals$, and for all $\ell$. We note that typically used activation functions like ReLU or $\tanh$ satisfy this condition.
%
\begin{theorem}[Stability of GNNs]
    \label{thm:GNNstability}
Let $\bbS$ and $\hbS$ be two GSOs such that $d(\bbS,\hbS) \leq \varepsilon$ where the error matrix $\bbE \in \ccalE(\bbS,\hbS)$ has an eigendecomposition $\bbE = \bbU \bbM \bbU^{\Hr}$. Consider a GNN $\bbPhi$ with $L$ layers where, in each layer, $\sigma_{\ell}$ is Lipschitz with constant $1$ and the filter $\bbh_{\ell}$ is integral Lipschitz with constant $C_{\ell}$. Then,
\begin{equation} \label{eqn:GNNstability}
    \big\| \bbPhi(\bbS, \cdot) - \bbPhi(\hbS, \cdot) \big\|_{\ccalP} \leq 2C \left( 1+ \delta \sqrt{N} \right) L \varepsilon + \ccalO(\varepsilon^{2})
\end{equation}
with $C = \max_{\ell} \{C_{\ell}\}$ and $\delta := (\|\bbU - \bbV+1)^{2}-1$ measuring the eigenvector basis misalingment.
\end{theorem}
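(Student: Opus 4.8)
The plan is to bootstrap from the single-filter estimate in Theorem~\ref{thm:graphFilterStability}, applying it one layer at a time, and then to control how the per-layer errors accumulate along the cascade by exploiting that integral Lipschitz filters and normalized Lipschitz nonlinearities are both non-expansive. First I would fix once and for all a permutation $\bbP\in\ccalP$ and an error matrix $\bbE\in\ccalE(\bbS,\hbS)$ essentially attaining $d(\bbS,\hbS)\le\varepsilon$, so that $\hbS_{\bbP}:=\bbP^{\Tr}\hbS\bbP = \bbS+(\bbE\bbS+\bbS\bbE)$ with $\|\bbE\|\le\varepsilon$; the point is that this $\bbP$ and $\bbE$ are a property of the pair $(\bbS,\hbS)$, not of any filter taps, so the same $\bbP$ serves every one of the $L$ layers (and the eigenvector misalignment $\delta$, built from the eigenbasis of $\bbE$, is the same in every layer's bound). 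For an input $\bbx$ with $\|\bbx\|=1$, let $\bbx_{\ell}$ be the hidden features produced by $\bbPhi(\bbS,\cdot)$ as in \eqref{eqn:GNN}, and let $\bbz_{\ell}$ be those produced by the same network run on $\hbS_{\bbP}$ with input $\bbz_{0}=\bbP^{\Tr}\bbx$; then $\bbPhi(\bbS,\bbx)=\bbx_{L}$ and $\bbPhi(\hbS_{\bbP},\bbP^{\Tr}\bbx)=\bbz_{L}$, and it suffices to bound $\|\bbP^{\Tr}\bbx_{L}-\bbz_{L}\|$, since taking the maximum over $\|\bbx\|=1$ and then the infimum over $\bbP\in\ccalP$ recovers $\|\bbPhi(\bbS,\cdot)-\bbPhi(\hbS,\cdot)\|_{\ccalP}$.

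Two preliminary facts drive the induction. Since $|h_{\ell}(\lambda)|\le 1$ for an integral Lipschitz filter, $\|\bbH_{\ell}(\bbS)\|\le 1$ and $\|\bbH_{\ell}(\hbS_{\bbP})\|\le 1$ in operator norm; and since each $\sigma_{\ell}$ is normalized Lipschitz with $\sigma_{\ell}(0)=0$ (as holds for ReLU and $\tanh$), we get $\|\bbx_{\ell}\|\le\|\bbx_{\ell-1}\|$ and $\|\bbz_{\ell}\|\le\|\bbz_{\ell-1}\|$, hence $\|\bbx_{\ell}\|\le 1$ and $\|\bbz_{\ell}\|\le 1$ for all $\ell$. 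Now, because $\sigma_{\ell}$ acts entrywise one has $\bbP^{\Tr}\sigma_{\ell}(\bbv)=\sigma_{\ell}(\bbP^{\Tr}\bbv)$; combining this with the unit Lipschitz constant of $\sigma_{\ell}$, the triangle inequality after adding and subtracting $\bbH_{\ell}(\hbS_{\bbP})\bbP^{\Tr}\bbx_{\ell-1}$, the bound of Theorem~\ref{thm:graphFilterStability} (more precisely, the bound its proof establishes for the permutation $\bbP$ of the perturbation model) applied to the term $\big\|\bbP^{\Tr}\bbH_{\ell}(\bbS)\bbx_{\ell-1}-\bbH_{\ell}(\hbS_{\bbP})\bbP^{\Tr}\bbx_{\ell-1}\big\|$ together with $\|\bbx_{\ell-1}\|\le 1$, and $\|\bbH_{\ell}(\hbS_{\bbP})\|\le 1$ on the remaining term, yields the recursion
\[
\big\| \bbP^{\Tr}\bbx_{\ell} - \bbz_{\ell} \big\| \;\le\; \big\| \bbP^{\Tr}\bbx_{\ell-1} - \bbz_{\ell-1} \big\| + 2C_{\ell}\big(1+\delta\sqrt{N}\big)\varepsilon + \ccalO(\varepsilon^{2}).
\]

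Since $\bbP^{\Tr}\bbx_{0}-\bbz_{0}=\bbzero$, unrolling this from $\ell=1$ to $L$ and bounding $C_{\ell}\le C:=\max_{\ell}C_{\ell}$ gives $\|\bbP^{\Tr}\bbx_{L}-\bbz_{L}\|\le 2C(1+\delta\sqrt{N})L\varepsilon + \ccalO(\varepsilon^{2})$, the $L$ quadratic remainders being absorbed into a single $\ccalO(\varepsilon^{2})$; maximizing over $\|\bbx\|=1$ and taking the infimum over $\bbP$ then gives \eqref{eqn:GNNstability}.

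I expect the main obstacle to be exactly this error-propagation step: a priori, perturbing the GSO inside each of $L$ successive filters could compound multiplicatively. What rescues the linear-in-$L$ dependence is the $|h_{\ell}(\lambda)|\le 1$ clause of the integral Lipschitz definition together with the unit Lipschitz constant of the nonlinearities, which makes each layer non-expansive, so that the per-layer perturbation contributions add rather than multiply. A secondary point one has to be careful about is that a single permutation $\bbP$ must work simultaneously for all $L$ layers; this is legitimate precisely because the relative-perturbation model $\hbS_{\bbP}=\bbS+\bbE\bbS+\bbS\bbE$ depends only on $(\bbS,\hbS)$ and not on the layer index.
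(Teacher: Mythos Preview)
Your argument is correct and is essentially the approach of the referenced proof: fix the single permutation $\bbP$ (and hence the single $\bbE$ and $\delta$) coming from the relative-perturbation model, invoke the per-layer operator bound underlying Theorem~\ref{thm:graphFilterStability} for that fixed $\bbP$, and propagate the error through the cascade using the non-expansiveness of each layer (from $|h_{\ell}(\lambda)|\le 1$ and the unit Lipschitz constant of $\sigma_{\ell}$) to obtain an additive, hence linear-in-$L$, accumulation. Your explicit remark that $\sigma_{\ell}(0)=0$ is needed to keep $\|\bbx_{\ell}\|\le 1$ is a useful clarification that the paper leaves implicit; it is indeed satisfied by the activations the paper considers and is assumed in the full proof.
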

\begin{proof}
    See \cite[Appendix E]{Gama19-Stability}.
\end{proof}
\noindent Thm.~\ref{thm:GNNstability} proves that GNNs are stable in the sense that, if the constitutive filters are integral Lipschitz and the activation function is normalized Lipschitz, then a change of $\varepsilon$ in the GSOs causes a change proportional to $\varepsilon$ in the output of the GNNs. The proportionality constant has the term $C$ that depends on the filter design, and the term $(1+\delta\sqrt{N})$ that depends on the specific perturbation under consideration. But it also has a constant factor $L$ that depends on the depth of the architecture. Therefore, the deeper a GNN is the less stable it becomes. This is due to how the errors propagate and amplify through subsequent applications of graph filters.

We have proven that graph filters have the properties of permutation equivariance and stability to relative graph perturbations, and that GNNs inherit these properties. We have also observed that the corresponding graph filters have to be integral Lipschitz for stability to hold, and that the integral Lipschitz constant controls the level of stability. This observation helps explain why GNNs exhibit better performance when dealing with signals with relevant high-eigenvalue frequency content. To see this, consider the following example.

Let $\bbS$ be the GSO of a given graph, and consider the perturbation $\hbS$ to be an edge dilation
\begin{equation}
    \hbS = (1+\varepsilon) \bbS
\end{equation}
where all edges are increased proportionally by a factor of $\varepsilon$. Clearly, $\bbE = (\varepsilon/2) \bbI$ and $d(\bbS, \hbS) = \|\bbE\| \leq \varepsilon$. The eigenvalues are now $\hat{\lambda}_{n} = (1+\varepsilon) \lambda_{n}$ while the eigenvectors remain the same. We note that, even if $\varepsilon$ is very small, the change in eigenvalues could be large if $\lambda_{n}$ is large, see Fig.~\ref{subfig:ILfilter}.

To account for this variability in large eigenvalues (even for small $\varepsilon$) we need to design the frequency response [cf. \eqref{eqn:frequencyResponse}] so as to absorb these changes. Otherwise, the output of filtering [cf. \eqref{eqn:filterOutputFrequency}] could change significantly (the values of $h(\lambda_{n})$ and $h(\hat{\lambda}_{n}$) could be very different), making it unstable. The integral Lipschitz condition on filters, precisely avoids this problem by forcing the frequency response to be flat for large eigenvalues, see Fig.~\ref{subfig:ILfilter}.

\begin{figure*}[t]
    \centering
    \begin{subfigure}{0.3\textwidth}
        \centering
        \includegraphics[width=\textwidth]{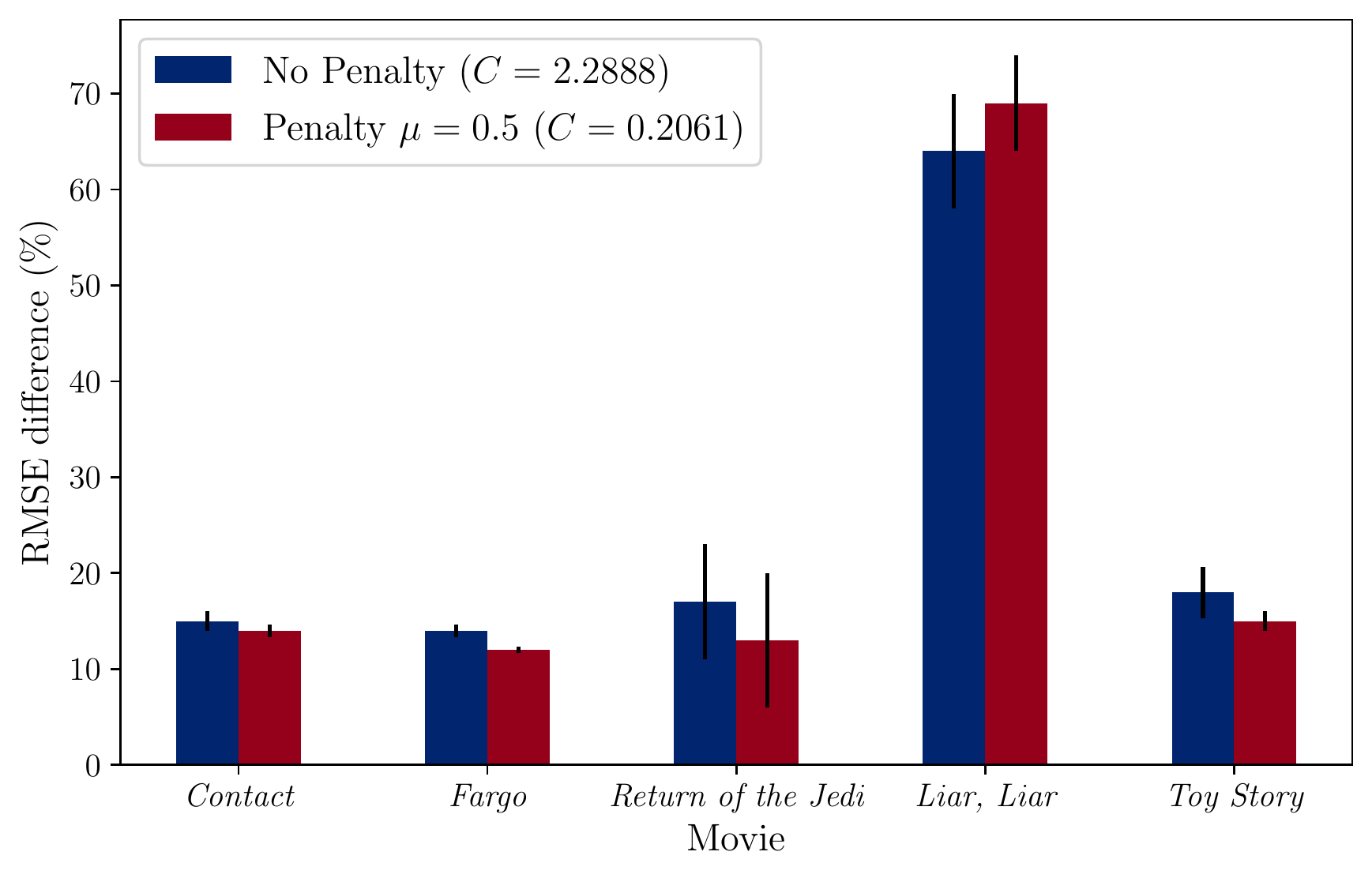}
        \caption{Train on 'Star Wars', test on other movies}
        \label{subfig:relativeRMSE}
    \end{subfigure}
    \hfill
    \begin{subfigure}{0.3\textwidth}
        \centering
        \includegraphics[width=\textwidth]{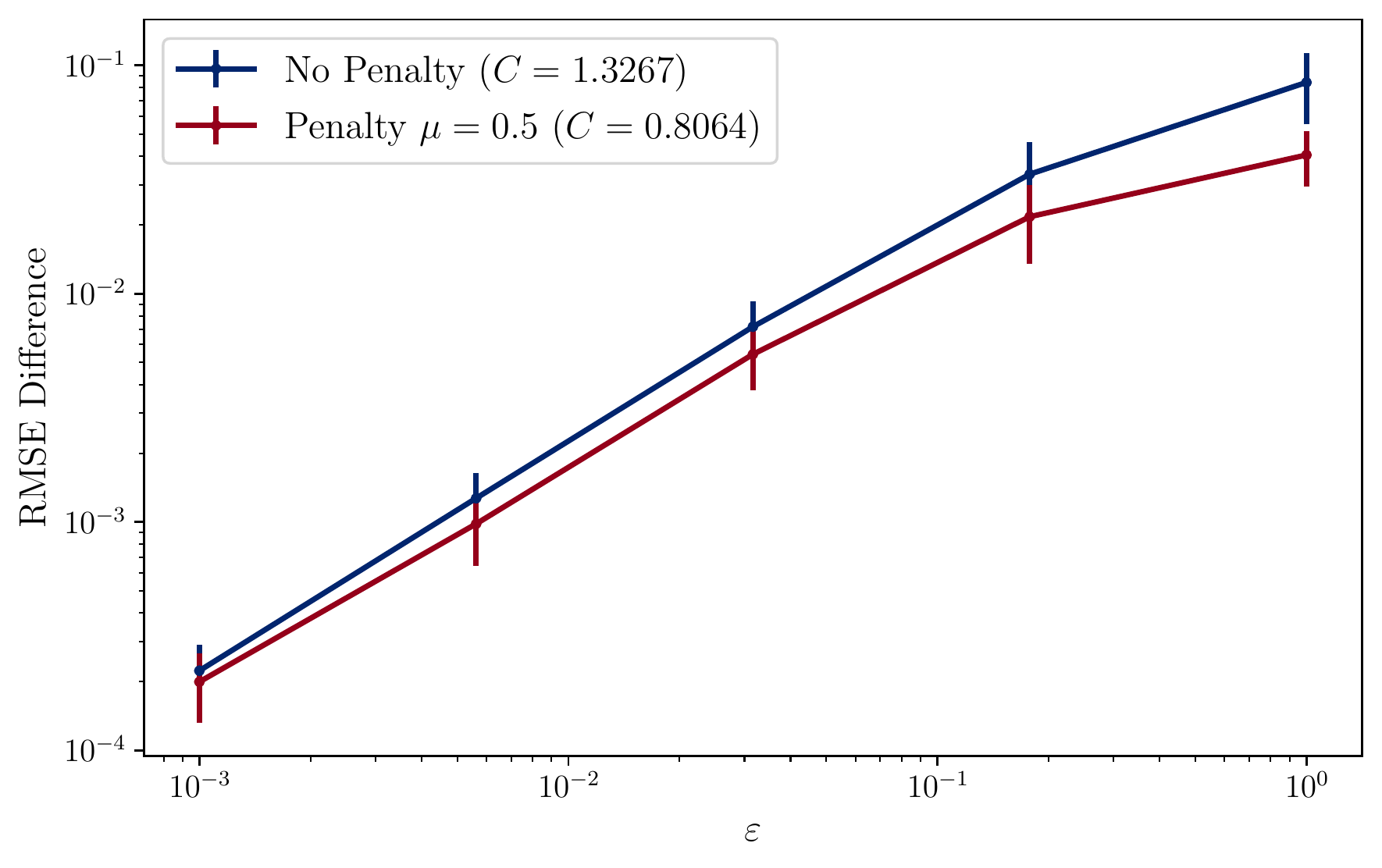}
        \caption{Synthetic perturbations}
        \label{subfig:graphVar}
    \end{subfigure}
    \hfill
    \begin{subfigure}{0.3\textwidth}
        \centering
        \includegraphics[width=\textwidth]{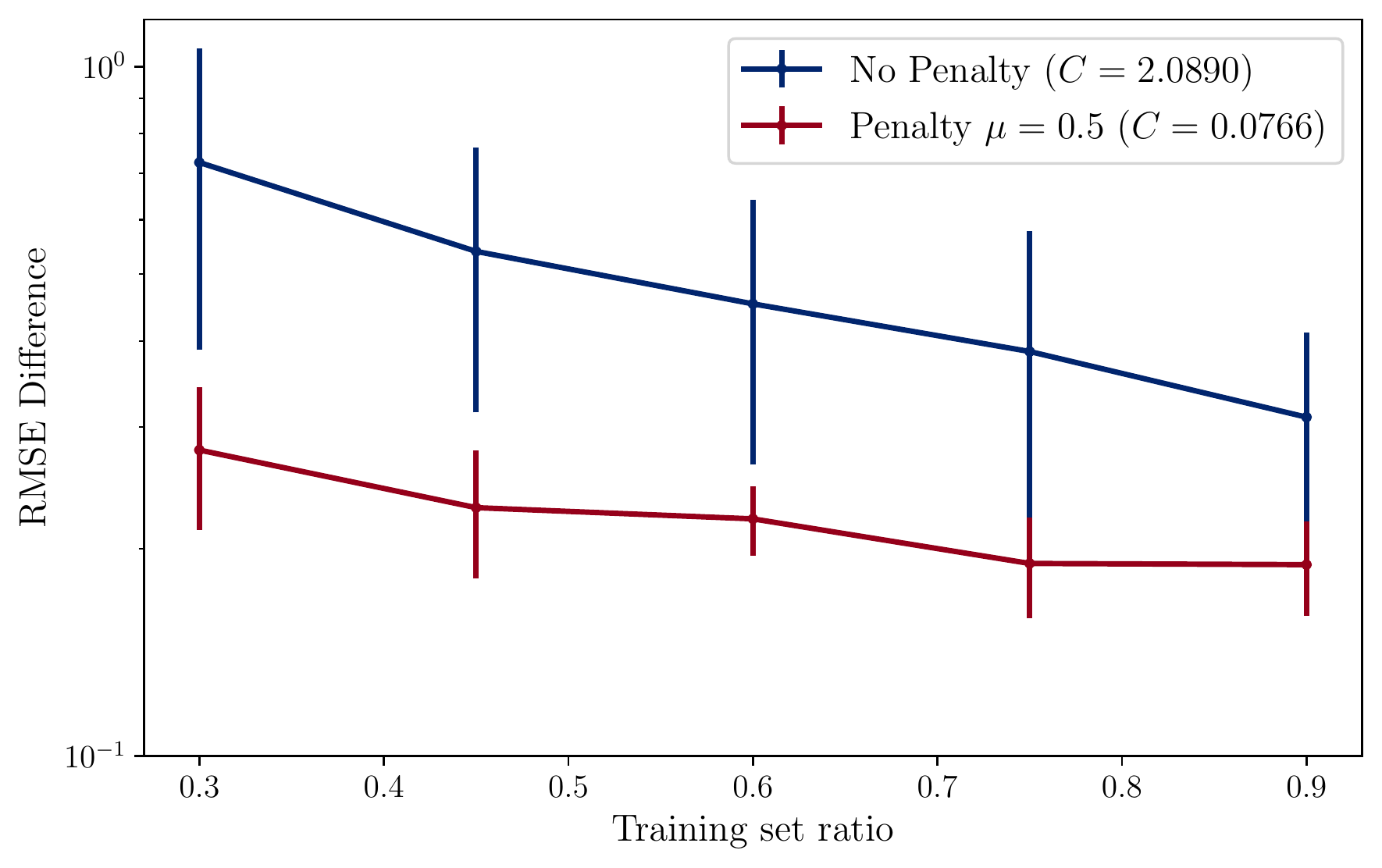}
        \caption{Change in train/test ratio}
        \label{subfig:splitVar}
    \end{subfigure}
    \caption{\subref{subfig:relativeRMSE} RMSE degradation (in percentage) when the architectures are trained to predict the rating of the movie \emph{Star Wars}, but are tested on $5$ other movies. We see that, except for the case of \emph{Liar, Liar}, the RMSE degradation is below $20\%$ with the more stable architecture having a degradation below $15\%$. \subref{subfig:graphVar} RMSE degradation when testing on GSOs $\hbS$ that have been synthetically perturbed within a relative distance of $\varepsilon$. The more stable architecture exhibits a smaller RMSE degradation. \subref{subfig:splitVar} RMSE degradation when testing the architectures on different GSOs obtained by changing the training/test ratio. When we have a ratio similar to that on which the architecture was trained, the RMSE degradation is lower.\vspace{-0.2cm}}
    \label{fig:sims}
\end{figure*}
%

The cost to pay for stability, however, is that integral Lipschitz filters are not able to discriminate information located at higher eigenvalues (Fig.~\ref{subfig:highFeature}). In essence, linear filters are either stable or discriminative, but cannot be both. GNNs on the other hand, incorporate pointwise nonlinearities at the output of the linear graph filter. This nonlinear operation has a frequency mixing effect by which the energy of the signal is spilled throughout the spectrum, see Fig.~\ref{subfig:frequencyMixing}. Then, the energy that appears in smaller eigenvalues can be arbitrarily discriminated, providing GNNs a way to stably discriminate signals with large eigenvalue content. Thus, GNNs are information processing architectures that are both stable and selective.

Finally, we remark that the above proofs and analysis can be readily extended to multi-feature signals (graph signal tensors) which assign a vector of $F$ features to each node, instead of a single scalar. Please, refer to \cite{Gama19-Stability} for details on the multi-feature setting.


\section{Numerical Experiments} \label{sec:sims}

Consider a given dataset of input-output pairs $\ccalT = \{(\bbx,\bby)\}$ where $\bbx$ is a graph signal defined on a graph with GSO $\bbS$. We want to use a GNN as a nonlinear map between the input $\bbx$ and the output $\bby$. We can thus use the given dataset to \emph{fit} or \emph{train} the neural network by finding the filter taps $\bbh_{\ell}$ that minimize some loss function $\min_{\bbh_{\ell}} \sum_{(\bbx,\bby) \in \ccalT} \ccalL[\bbPhi(\bbS, \bbx), \bby]$. We note that, in the context of training, the permutation equivariance property of GNNs serves as a form of \emph{data augmentation}. More precisely, by exploiting the topological symmetries of the underlying graph, the GNN can learn how to process the signal on all those parts of the graph that are topologically symmetric by seeing a sample in only one of them.

By minimizing the loss function, we obtain a given set of filter taps that are a good fit for the data at hand. However, the resulting frequency responses might not have a good stability constant. To overcome this, we add a penalty to the loss function in order to control the value of the stability constant
\begin{equation} \label{eqn:lossFunctionWithPenalty}
    \{\bbh_{\ell}\} = \argmin_{\bbh_{\ell}}  \sum_{(\bbx,\bby)\in\ccalT} \ccalL\big[ \bbPhi(\bbS,\bbx), \bby\big] + \mu \max_{\lambda \in [\lambda_{a},\lambda_{b}]} |\lambda h'(\lambda)|.
\end{equation}
We note that the bounds of the interval $[\lambda_{a},\lambda_{b}]$ have to be set before training begins. One option is to set it to the eigenvalue interval of the given GSO (this would demand an eigendecomposition, albeit only once before training begins). Alternatively, we can exploit well-known bounds relating the eigenvalues with the topology of the graph \cite{Cvetkovic79-SpectraGraphs, Godsil01-AlgebraicGraphTheory}. Furthermore, we note that the computation of the derivative $h'(\lambda)$ is straightforward, since it is also a polynomial with the same filter taps $\bbh_{\ell}$ that we are optimizing over. Finally, we remark that, by tweaking the penalty value $\mu$ we adjust the trade-off between better performance and more stability. If $\mu$ is too big, then the training would tend to set all filter taps to $0$, which is the trivial but perfectly stable solution.

In what follows, we consider the problem of movie recommendation systems \cite{Huang18-RatingGSP}. We are given a dataset of user ratings for some movies, and we want to \emph{learn} how a user would rate a specific movie given their previous ratings and all other users in the dataset. To do this, we build a graph where each node is a movie, and the edges are based on the Pearson correlation coefficient obtained from the pool of users that have rated any given pair of movies. See \cite[Sec.~II]{Huang18-RatingGSP} for details on the construction of this graph. We then prune this graph keeping only the $10$ nearest neighbors, we make it undirected by keeping the average edge weight and we adopt the resulting adjacency matrix as the GSO $\bbS$. Additionally, we model each user in the dataset as a graph signal $\bbx$, where the value $[\bbx]_{n} = x_{n}$ at node $n$ is the rating that the user has given to movie $n$. The ratings are integers between $1$ and $5$, and if no rating was given, then we assign $x_{n} = 0$ to said node.

We consider the MovieLens-100k \cite{Harper16-MovieLens} dataset, that consists of $100,000$ ratings given by $943$ users to $1,582$ movies. Following the above described model, this implies a dataset of $943$ graph signals defined over a graph with $1,582$ nodes. We focus on \emph{learning} the rating that any given user would give to a specific movie (node $n$) based on the ratings given to other movies (the graph signal $\bbx$) and the relationship with other users that have similar taste (given by the graph support $\bbS$). We consider all users that have rated the specific movie (have nonzero value $[\bbx]_{n} = x_{n} > 0$), take the rating $x_{n}$ as the label $y$ associated to the signal $\bbx$ and then zero-out the $n$th entry $[\bbx]_{n} = 0$ (to render the rating unknown). Specifically, we choose to estimate the rating at the movie \emph{Star Wars} given that is the one with the largest number of ratings. We use $90\%$ of the resulting dataset for training and $10\%$ for testing (no samples in the test set are included when estimating the graph).

The map between the graph signal $\bbx$ (ratings for some of the movies) and the target $y=x_{n}$ (rating for the specific movie) is parametrized by a single-layer GNN with $F_{1}=64$ output features, using graph filters with $K_{1}=5$ filter taps, followed by a ReLU nonlinearity. Since the output of this GNN $\bbx_{1}$ is another graph signal, we focus particularly on the value of the $64$ features at the node $n$ of interest. We further learn a readout layer consisting of a linear combination of the resulting $64$ features at node $n$ so that the final output is a single scalar predicting the rating given at said node. We note that all operations involved are local. First, a graph convolutional layer involving the application of the graph filter bank that demands $K_{1}-1 = 4$ exchanges with the one-hop neighborhood, and then a readout layer consisting of a linear transformation of the $64$ resulting features at the single node of interest (i.e. no involvement of values at any other node in this readout layer).

We train this GNN by minimizing the loss function \eqref{eqn:lossFunctionWithPenalty}, where $\ccalL$ is a smooth L1 loss. We consider two different training cases leading to two different models. The first one in which there is no penalty ($\mu=0$) and the second one where we set $\mu = 0.5$. We use the ADAM optimizer \cite{Kingma15-ADAM} with learning rate $0.005$ and forgetting factors $\beta_{1} = 0.9$ and $\beta_{2} = 0.999$. We train for $40$ epochs using batches of size $5$. In all subsequent experiments, we report averages over $10$ different dataset split realizations (the split is selected at random) and the corresponding standard deviation.

For the first experiment, we train the GNNs to estimate the rating of the movie \emph{Star Wars} both with no penalty ($\mu = 0$) and with stability penalty ($\mu = 0.5$). At test time, we obtain an RMSE of $0.8640$ $(\pm 0.1674)$ for the No Penalty GNN, and $0.8655$ $(\pm 0.1655)$ for the Penalty GNN. We then proceed to test these already trained GNNs on estimating the rating at some other movies, as shown on Fig.~\ref{subfig:relativeRMSE}. We see that, except for the case of the movie \emph{Liar, liar}, the RMSE degradation for estimating the rating of a movie the GNN was not trained for is below $20\%$. Moreover, in all cases, the more stable GNN (the one trained with the penalty) exhibits a degradation below $15\%$ and always better than the No Penalty GNN.

For the second experiment, we introduce a synthetic relative perturbation to the GSO $\bbS$ by randomly generating a GSO $\hbS$ such that $d(\bbS, \hbS) \leq \varepsilon$ [cf. \eqref{eqn:GSOdistance}]. We then test on $\hbS$ the GNNs trained on $\bbS$ for estimating the rating at node $n$ and compare the RMSE with that obtained when testing on $\bbS$. The results illustrated in Fig.~\ref{subfig:graphVar} show that the Penalty GNN is more stable than the No Penalty one, and that the gap in RMSE difference increases as the perturbation increases.

Finally, we consider perturbations arising from the randomness in choosing the training/test set split. Since we use the resulting training set to create $\bbS$, a change in the training set selection results in a different $\hbS$. In Fig.~\ref{subfig:splitVar} we show the RMSE difference between testing on the trained $0.9/0.1$ partition and testing on other partitions. Again, we observe that the Penalty GNN is more stable.

\vspace{-0.1cm}


\section{Conclusions} \label{sec:conclusions}

In this work we discussed the stability properties of graph filters and GNNs. We proved that both are permutation equivariant and are stable to relative perturbations of the underlying graph support. We observed that a condition for stability is that graph filters need to have a flat frequency response at large eigenvalues. We then argued that this prevents graph filters to be able to discriminate features located on these eigenvalues, and that this is a fundamental limitation of graph filters, which can thus be either stable or selective, but not both. GNNs, on the other hand, use the frequency mixing effect of nonlinearities to spread the information content throughout the eigenvalue spectrum, especially on lower eigenvalues where it can be separated in a stable fashion. Thus, GNNs are information processing architectures that are both discriminative and stable. The improved performance of GNNs over graph filters thus becomes more marked when processing signals where the relevant information content is in high frequencies. Finally, we run experiments on a movie recommendation problem, where we show that more stable architectures exhibit a better performance when tested on different conditions than the ones they were trained on.


\vfill\pagebreak

\bibliographystyle{IEEEbib}
\bibliography{myIEEEabrv,biblioStability}

\end{document}